\title{The Probabilistic Analysis of the Network Graph created by Dynamic Boundary Coverage} 
\author{Ganesh P. Kumar \footnotemark[1]\ \footnotemark[3]
\and Spring Berman \footnotemark[2]\ \footnotemark[3]}
\let\oldtitle\title
\renewcommand\title[1]{%
    \begingroup
        \providecommand{\ttlit}{}%
        \renewcommand{\ttlit}[1]{}%
        \providecommand{\titlenote}{}%
        \renewcommand{\titlenote}[1]{}%
        \hypersetup{pdftitle={#1}}%
        \def\thetitle{#1}%
        \pdfbookmark[0]{#1}{title}
    \endgroup
    \oldtitle{#1}%
}
\def\citet{\@ifstar{\citetstar}{\citetnostar}}
\def\Citet{\@ifstar{\Citetstar}{\Citetnostar}}
\def\citetnostar{\@ifnextchar[{\squarecitet}{\simplecitet}}
\def\squarecitet[#1]{\@ifnextchar[{\twocitet[#1]}{\onecitet[#1]}}
\def\Citetnostar{\@ifnextchar[{\squareCitet}{\simpleCitet}}
\def\squareCitet[#1]{\@ifnextchar[{\twoCitet[#1]}{\oneCitet[#1]}}
\def\citetstar{\@ifnextchar[{\squarecitetstar}{\simplecitetstar}}
\def\squarecitetstar[#1]{\@ifnextchar[{\twocitetstar[#1]}{\onecitetstar[#1]}}
\def\Citetstar{\@ifnextchar[{\squareCitetstar}{\simpleCitetstar}}
\def\squareCitetstar[#1]{\@ifnextchar[{\twoCitetstar[#1]}{\oneCitetstar[#1]}}
\def\simplecitet#1{\citeauthor{#1}~\citep{#1}}
\def\onecitet[#1]#2{\citeauthor{#2}~\citep[#1]{#2}}
\def\twocitet[#1][#2]#3{\citeauthor{#3}~\citep[#1][#2]{#3}}
\def\simplecitetstar#1{\citeauthor*{#1}~\citep{#1}}
\def\onecitetstar[#1]#2{\citeauthor*{#2}~\citep[#1]{#2}}
\def\twocitetstar[#1][#2]#3{\citeauthor*{#3}~\citep[#1][#2]{#3}}
\def\simpleCitet#1{\Citeauthor{#1}~\citep{#1}}
\def\oneCitet[#1]#2{\Citeauthor{#2}~\citep[#1]{#2}}
\def\twoCitet[#1][#2]#3{\Citeauthor{#3}~\citep[#1][#2]{#3}}
\def\simpleCitetstar#1{\Citeauthor*{#1}~\citep{#1}}
\def\oneCitetstar[#1]#2{\Citeauthor*{#2}~\citep[#1]{#2}}
\def\twoCitetstar[#1][#2]#3{\Citeauthor*{#3}~\citep[#1][#2]{#3}}
\newcommand{\AMShreffix}[1]{%
        \expandafter\let\csname AMShreffix#1\expandafter\endcsname%
                \csname #1\endcsname%
        \expandafter\renewcommand\csname #1\endcsname{%
                \@hyper@itemfalse\csname AMShreffix#1\endcsname}}
\newcommand{\vm}{\mathbb{V}}
\newcommand{\Vol}{\mathrm{Vol}} 
\newcommand{\favn} {\mathcal{F}}
\newcommand{\unfavn} {\mathcal{U}}
\newcommand{\favcf}{\mathcal{F}_{\mathrm{CF}}}
\newcommand{\hypn}{\mathcal{H}}
\newcommand{\upos}[1]{x_{#1}}
\newcommand{\pos}[1]{  \mathtt{x}_{#1}}
\newcommand{\uposvec}{\mathbf{x}}
\newcommand{\posvec}{\mathbf{\mathtt{x}}}
\newcommand{\psimpn} {\mathcal{P}} 
\newcommand{\psimpcf}{\mathcal{P}_{\rm{CF}}}
\newcommand{\ssimpcf}{\mathcal{S}_{\rm{CF}}}
\newcommand{\rvupos}[1] {X_{#1}}
\newcommand{\rvuposvec} {\mathbf{X}}
\newcommand{\rvpos}[1] {\mathtt{ X}_{#1}}
\newcommand{\rvposvec} { \mathbf{\mathtt{X}}}
\newcommand{\rvsvec} {\mathbf{S}}
\newcommand{\jpos} { f_{\mathtt{X}} (\posvec)}
\newcommand{\avec}{\mathbf{a}}
\newcommand{\G} {\mathcal{G}}
\newcommand{\fpar}{_{p}f}
\newcommand{\Fpar}{_{p}F} 
\newcommand{\fparvec}{_{p} \mathbf{f}}
\newcommand{\Fparvec}{_{p} \mathbf{F}}
\newcommand{\Poi}[1]{\mathrm{Poi}(#1)}
\newcommand{\num}{\mathrm{num}}
\newcommand{\den}{\mathrm{den}}
\newcommand{\PCON} {\mathsf{PCON}}
\newcommand{\PH} {\mathsf{\#PH}}
\newcommand{\PseudoP}{pseudo-$\mathsf{P}$ }
\newcommand{\EXP}{\mathsf{\#EXP}}
\newcommand{\Poly} {\mathsf{P}}
\newcommand{\VHSP}{\mathsf{VHSP}}
\newcommand{\Env}{\mathcal{E}}
\newcommand{\state}{\mathrm{state}}
\newcommand{\si}[1]{s_{#1}}
\newcommand{\svec} {\mathbf{s}}
\newcommand{\osi}[1] {\mathtt{s}_{#1}}
\newcommand{\orvsi}[1]{ \mathtt{S}_{#1}}
\newcommand{\osvec} {\mathbf{\mathtt{s}}}
\newcommand{\Bo}{\mathcal{B}}
\newcommand{\fsvec}{\tilde{\svec}}
\newcommand{\ssimpn}{\mathcal{S}}
\newcommand{\rvsla}[1] {S_{#1}}
\newcommand{\jsla} {f_{\mathbf{S}} (\mathbf{s})}
\newcommand{\rvfsla}[1] {\tilde{\rvsla{i}}}
\newcommand{\rvfpos}[1] {\tilde{\rvpos{i}}}
\newcommand{\nmin}{n_{\min}}
\newcommand{\nmax}{n_{\max}}
\newcommand{\eye}{\mathbb{I}}
\newcommand{\0}{\mathbf{0}}
\newcommand{\1}{\mathbf{1}}
\newcommand{\dia}{R}
\newcommand{\nor}{n}
\newcommand{\psat}{\pcon}
\newcommand{\sat}{\con}
\newcommand{\con}{d}
\newcommand{\coni}[1]{d_{#1}}
\newcommand{\pcon} {p_{\mathsf{con}}}
\newcommand{\rob}[1]{R_{#1}}
\newcommand{\Rob}{\mathcal{R}}
\newcommand{\Q}{\mathbb{Q}}
\newcommand{\R}{\mathbb{R}}
\newcommand{\N}{\mathbb{N}}
\newcommand{\rvn}{\mathrm{N}}
\newcommand{\parkc} {n_{pc}}
\newcommand{\vvec}{\mathbf{v}}
\newcommand{\ind} {\mathbf{I}}
\newcommand{\Uni} {\mathrm{U}}
\newcommand{\Exp} {\mathbb{E}}
\newcommand{\Var}[1] {\mathcal{V}(#1)}
\newcommand{\cansimp}{\Delta} 
\newcommand{\canhyp} {\mathcal{C}}
\newcommand{\meas}{\mathbf{\mathrm{Leb}}}
\newcommand{\ev}[1]{\hat{e}_{#1}}
\newcommand{\prob}{\mathbb{P}}
\newcommand{\rvna}{{N}_{A}}
\newcommand{\rvnd}{{N}_{D}}
\newcommand{\ddt}{\frac{\mathrm{d}}{\mathrm{d} t}}
\newcommand{\cmp}{\mathrm{cmp}}
\newcommand{\cov}{\mathrm{cov}}
\newcommand{\edg}{\mathrm{edg}}
\newtheorem{problem} {Problem}
\def\hlinewd#1{%
\noalign{\ifnum0=`}\fi\hrule \@height #1 %
\futurelet\reserved@a\@xhline}
\begin{document}

\maketitle

\renewcommand{\thefootnote}{\fnsymbol{footnote}}

\footnotetext[2]{School of Computing, Informatics, and Decision Systems Engineering, Arizona State University, Tempe AZ 85287.}

\footnotetext[3]{School for Engineering of Matter, Transport and Energy, Arizona State University, Tempe AZ 85287.}

\footnotetext[5]{This work was supported by DARPA Young Faculty Award no. D14AP00054.}

\renewcommand{\thefootnote}{\arabic{footnote}}

\maketitle
\slugger{mms}{xxxx}{xx}{x}{x--x}

\begin{abstract}

\end{abstract}

\begin{keywords}\end{keywords}

\begin{AMS}\end{AMS}

\section{Introduction}
\label{Sec:Intro} 

\subsection{Background}

We address the problem of achieving {\it boundary coverage} with a swarm of autonomous robots.  In this task, a group of robots must allocate themselves around the boundary of a region or object according to a desired configuration or density.  We specifically consider problems of {\it dynamic} boundary coverage, in which robots asynchronously  join a boundary and later leave it to recharge or perform other tasks.  Applications: mapping, exploration, environmental monitoring, surveillance, disaster response tasks such as cordoning off hazardous areas; collective payload transport, in which the group cooperatively transports a load to a destination, for automated manipulation, assembly, construction, and manufacturing.
 
We focus on \textit{stochastic coverage schemes (SCS)}, in which robots probabilistically choose positions on the boundary. Our interest in SCS, as opposed to deterministic coverage schemes, is motivated by the following reasons. First, they enable a probabilistic analysis of the graph for different classes of inputs identified by the joint pdf of robot positions.  Second, SCS allow us to model natural phenomena such as \textit{Random Sequential Adsorption} (RSA) \cite{RSACubes}, the clustering of ants around a food item \cite{KumarSHS2013}, and \textit{Renyi Parking} \cite{parking}, the process by which a fleet of cars parks without collisions on a parking lot. Lastly, results from SCS allow us to analyze the distributions of robots with noisy sensing and actuation, even though the underlying coverage scheme may be deterministic.   

\subsubsection{Assumptions about Robot Capabilities}
We assume that each robot can locally sense its environment and communicate with other robots nearby.  Disk model of sensing/communication.  Robots can distinguish between other robots and a boundary of interest.  The robots lack global localization: highly limited onboard power may preclude the use of GPS, or they may operate in GPS-denied environments.  The robots also lack prior information about their environment.  Each robots exhibits random motion that may be programmed, for instance to perform probabilistic search and tracking tasks, or that arises from inherent sensor and actuator noise.  This random motion produces uncertainty in the locations of robot encounters with a boundary.  For this reason, we refer to the task as {\it stochastic boundary coverage}.  In addition, we assume the robots have sufficient memory to store certain data structures.  



\subsection{Summary of Results}

 We devise a data structure to implement our coverage schemes, and we compute the probabilities of connectivity of various coverage schemes. 


\subsection{Models of Boundary Coverage}



We consider a team of robots, $\{\Rob_{i} \}\in [1\ldots n]$ in a bounded environment $\Env$. Robots are provided only with their (perfect) odometric readings and Wifi measurements, and a camera for detecting landmarks. Each robot is a disk of diameter $\dia$, and its Wi-fi has a coverage radius of $\sat$.    They have no knowledge of their global positions or other means to localize. In the environment is placed a load in the form of a thin line, called the \textit{Boundary} $\Bo$, which is colored black,  distinctively from the rest of the environment. One endpoint of $\Bo$ is painted white. 

Since the main thrust of this paper lies in the randomized analysis of the network created by the robots, we will make the following simplifying assumptions.  All robots are synchronized in time, with respect to a global clock. No robot fails in the course of its execution. Dealing with failures, and determining the success of boundary coverage in rugged environments where Wifi may fail are issues to be addressed in our future work. 

To begin with, we consider point robots, for which $\dia=0$ and thus the issue of inter-robot collisions does not arise. Let $\nor$ of them attached to $\Bo$ at a time instant $t \in \N$ \cite{KumarTRO2015}.   Let the position of robot $i$ be $\upos{i}$. Define the vector of \textit{unordered} positions to be $\uposvec(t):=\begin{bmatrix} \upos{1\ldots n} \end{bmatrix}\mathrm{T}$. It will be convenient to make our computations if we sort this vector in nondecreasing order to get its permutation $\posvec = \begin{bmatrix} \pos{1\ldots n} \end{bmatrix}\mathrm{T}$, whose entry $\pos{i}$ is the $i$-th robot from the left, and not necessarily the position of $\rob{i}$. Since $\pos{i}$ forms the $i$-th smallest of the $\nor$ entries of $\posvec$, it is called the $i$-th \textit{order statistic} of the positions \cite{ostat2003}. We may think of $\uposvec$ (and $\posvec$) as the realization of a PP in $\Bo^n$, so that $\uposvec$ forms a point in $\Bo^n$. Define the random variable associated with $\pos{i}$ to be $\rvpos{i}$, and place all these rv's in a vector $\rvposvec$  that defines the PP.

For convenience, we introduce two artificial robots $\pos{0} = 0$ and $\pos{\nor+1} = s$ stationed at the endpoints of $\Bo$. Since connectivity deals with inter-robot distances, it helps to think of them directly rather than in terms of $\posvec$. Define the $i$-th \textit{slack} $\si{i}$ to be the distance from $\pos{i}$ to $\pos{i+1}$, and the \textit{slack vector} $\svec$  $\svec_{1:n+1} :=  \posvec_{1:n+1} - \posvec_{0:n}$ to be the vector of all slacks.  Analogous to the rv's associated with positions, define the rv's $\rvsla{i}$ and the vector $\rvsvec$. 
We may think of $\svec$ as a point in $\Bo^{n+1}$. 

Now we introduce the notion of connectivity by defining a \textit{communication range} $d \in [0,s]$.  Two robots $\pos{i}$ and $\pos{j}$ are connected iff $|\pos{i} - \pos{j}| \leq \sat$. 
We model connectivity by a graph $\G(\posvec)$, whose nodes are $\pos{i}$ (or $\upos{i}$) , and edges are formed by pairs of connected robots. Since each node is a  geometric position, $\G(\rvposvec)$ forms a  \textit{Geometric Graph}.  We define a position vector $\posvec$ to be connected iff $\G(\posvec)$ has a path from $\pos{0}$ to $\pos{n+1}$, and disconnected otherwise.   When robot positions are chosen randomly, this graph becomes a Random Geometric Graph (RGG) \cite{Matthew2003RGG}. 

We will now make the transition from point robots to $\dia$-sized ones on the boundary. Define the \textit{position} of a robot as that of its left end, so that robot $\rob{i}$ located at $\upos{i}$ occupies the interval $\Inter{i} = [\upos{i},\upos{i}+\dia]$. The support of an attached robot position is $\Bo'=[0,s-\dia]$ (or a subset of it), so that the robot does not fall of the boundary endpoint $x=s$.  We define the position vector $\posvec$ of $n$ robots to be \textit{feasible} if there are no interrobot collisions, i.e. each slack $\si{i}$ is at least $\dia$. 

When  a robot attaches to the boundary, it selects an \textit{interval} of the boundary of length $\dia$ lying completely within the boundary. We will generally be able to abstract intervals into points, and consequently think of a SCS as the choice of multiple random \textit{points} on the boundary. Formally, a SCS is a \textit{one-dimensional Point Process (PP)} \cite{StochGeom} realized on the boundary. A special case of a PP involves robots attaching to a boundary at predefined locations.  We will be interested chiefly in the \textit{Poisson Point Process (PPP)} in which robots attach independently to the boundary, and its generalizations such as the \textit{Markov Process} \cite{StochGeom}. The independent attachments in PPPs make them easy to analyze; on the other hand, interactions between robots are harder to handle and require generalizations of PPPs.                                                                                                                                                                                                                         

To simplify our analysis, we will first work with point robots in \autoref{Sec:IID} which have $\Delta=0$, and consequently preclude inter-robot collisions. Point robots are an idealization of \textit{finite} robots which have nonzero diameter; they also provide useful approximations to the behavior of finite robots when $\Delta \ll s$. We will  compute the \textit{connectivity properties} of each SCS that we address, which include the probability of saturation, the distribution of distances to the nearest neighbor, and the joint and marginal pdfs of robot positions and inter-robot distances. 



\subsection{Problem Statement}
\label{Sec:Problem}  

We require the robots to perform the following tasks:
\begin{problem}
 \begin{enumerate}
  \item Form a connected network at the white endpoint of $\Bo$.  
  \item Attach to the boundary, forming a connected network, \textit{or} cover as much of the boundary as possible. 
  \item Efficiently the list of positions taken up by the team on $\Bo$
  \item Be able to update the map efficiently as robots join and leave the boundary
  \item Determine at any point of time the network graph, including the following properties: coverage length, number of redundant robots (i.e. those that can be removed without loss of coverage).
 \end{enumerate}
\end{problem}

\begin{problem}
 Compute the network properties for a random attachment scenario. 
\end{problem}                                                                                                                                         


\subsection{Related Work}
\label{Sec:LitRev} 

\subsubsection{Control of Multi-Robot Systems}
\label{Subsubsec:MRS}
Previous work on decentralized multi-robot boundary coverage has focused on controlling robots to converge to uniform or arbitrary formations on a circle \cite{Wang2014}.  In contrast to this work, we consider cases where there is inherent and/or programmed stochasticity in the robots' motion, and our objective is to achieve robot configurations with {\it target statistical properties}.  We assume that every robot has minimal capabilities: no global position information, and sensing or communication only within a small radius.  Task allocation strategies that are suitable for such scenarios often derive robot control policies from a continuum model of the swarm population dynamics, or {\it macroscopic model}, in order to enable the control policies to scale with the swarm size. Various stochastic approaches to robot task allocation have focused on optimizing the task-switching rates of such macroscopic models \cite{Correll08, LW10, BHHK09,OA10, MH11}. Macroscopic models have also been applied to problems of robotic assembly of products, as well as robotic self-assembly \cite{MBK09, EMM10, KBN06, NBK09}.  


\subsubsection{Wireless Networks}
\label{Subsubsec:Wifi}
Since our interest lies in getting a robot team to form a connected Wi-fi network around a boundary, we will dwell on data structures for routing in Mobile Ad Hoc Networks (\textit{MANETs}) \cite{MANET}. However, our main focus lies in analyzing the properties of the Geometric Graph (GG) formed by the multi-robot network. Our probabilistic analysis borrows heavily from the formalism of \textit{Poisson Point Processes (PPP)} \cite{RGG}, a class of spatial stochastic processes in which each robot takes positions independently of others. The network induced by a PPP is a \textit{Random Geometric Graph} (\textit{RGG}) \cite{RGG}. When attachments are required to be \textit{collison-free}), i.e. have no colliding pairs of robots, they are characterized by a Matern hard core process (\textit{HCP}). These spatial processes and their resulting RGGs have been extensively used in the wireless communication literature \cite{StochGeom}.


\subsubsection{Computational Geometry}
\label{Subsubec:CG}

All our results that compute $\pcon(G)$, the connectivity probability of $\G$, involve the order statistical properties of the pdf governing the attachment of individual robots, called the \textit{parent pdfs}, or parents for short \cite{ostat2003}. The order statistics of a collection of random variables $\mathbf{X}_{1:n}$ are generated by sorting them in nondecreasing order to get their permutation $\rvposvec$. The order statistics of uniform iid parents are the easiest to analyze; moreover,nonuniform iid parents of other forms may be readily converted to their uniform counterparts using the \textit{probability integral transform} \cite{ostat2003}. We use the computations in \cite{DyerFrieze} to derive a \PseudoP lower bound for computing $\pcon(\G)$ for uniform parents in \autoref{Subsubsec:pcon-uni}. Further, we demonstrate that determining $\pcon(\G)$ for arbitrary iid pdfs is $\PH$, by reduction from a result in \cite{DyerFrieze, dyer1991computing}. 

The analogous computation of $\pcon(\G)$ for independent, non-identical (inid) pdfs is considerably more complicated, and uses the \textit{Bapat-Beg theorem} \cite{glueck2008fast}. This computation is governed by a \cite{permanent}. We show in \autoref{Subsubsec:pcon-inid} by a reduction from the boolean permanent problem \cite{Valiant1979} that computing $\pcon$ is $\PH$.  

Our results for the Stochastic Coverage Scheme (SCS) involving Renyi parking stems from the work of Renyi \cite{Renyi58} and   Dvoretzky et al. \cite{parking}. The Renyi Parking Problem (RPP) defined in \autoref{Sec:Renyi} has been extensively studied in the physics literature under the name \textit{Random Sequential Adsorption (RSA)}, the process by which molecules get adsorbed onto a substrate surface \cite{talbot2000car,RSACubes}. The delay differential equation that governs the mean number of parked cars is extensively analyzed in \cite{RSACubes, parking}; moreover,\cite{RSACubes} computes the asymptotic properties of an interval tree that stores the occupied subintervals of the parking lot.  To our knowledge, however, there has been no analysis of the spatial probability density functions (pdfs) generated by the RPP. We derive an algorithm for computing this pdf using results from order statistics \cite{ostat2003}.

\section{Deterministic Coverage Strategy (DCS) for $\Bo$}
\label{Sec:tro-DS}

We will first provide a DCS for a group of finite-sized robots to form a connected network with uniform inter-robot spacing along a boundary $\Bo$.  This algorithm starts with a simple procedure, detailed in \autoref{alg:discover} below, that is guaranteed to make all robots join the same network.  Assuming there are no faults, \autoref{alg:discover} will terminate with all robots joining the network created by robot $\Rob_{1}$. 

\begin{algorithm}
    \caption{} 
    \label{alg:discover}
 \begin{algorithmic}[1]
       \Procedure{Form a connected network}{$i,n$} 
	\State $\mathtt{state} \gets \mathtt{EXPLORE}$
	\While {Black Boundary not seen}
	\State Execute Lawnmower walk
	\EndWhile 
	\While {White  endpoint not seen}
	\State Traverse $\Bo$
	\EndWhile 
	\If {$id = 1$}
	 \State Create Wifi network
	\Else
	 \While {network not created }
	   \State   Wait
	 \EndWhile
	 \State Join Wifi network
	\EndIf
	\State $\state \gets \mathtt{CONNECTED}$
	\EndProcedure
 \end{algorithmic} 
\end{algorithm}


In this MANET, \textit{every} node acts as a router. Once the robot team forms a connected network after the execution of \autoref{alg:discover}, the ID of every robot in the network is determined by flooding. This set of IDs is stored in the routing table of every robot.  Subsequently, one robot (say, $\Rob_{1}$) leaves the network to determine the length $s$ of the boundary using its odometry, and then rejoins the network by following the boundary back to its white endpoint. The maximum number of robots that can possibly attach to $\Bo$ is
\begin{align}
 \nmax = \lfloor \frac{s-R}{R} \rfloor. 
\end{align}
The minimum number of robots required to ensure connectivity is
\begin{align}
 \nmin = \lfloor \frac{s}{\con} \rfloor.
\end{align}
Based on these limits, the connectivity of the robot network falls into three categories: 

\textit{Case 1}: If $\nor < \nmin$, then at most $\nor \con$ of the boundary can be covered. 

\textit{Case 2}: If $\nor \in [\nmin,\nmax]$, then all robots can be accommodated, and can cover the boundary entirely. 

\textit{Case 3}: If $\nor > \nmax$, then $\nor - \nmax$ robots have to be dropped from coverage. In this case, the first $\nmax$ robots attach to the boundary, and the remaining are dropped.

Robots subsequently take up positions that are spaced $\con$ apart, so that $\pos{i} = (i-1)d$, using their odometry, with the white endpoint being considered $\pos{0}=0$.  Afterwards, the robots can coordinate to attach to, or detach from, $\Bo$. This DCS can be easily adapted to any SCS as follows. Instead of taking up equidistant positions, the robot team collectively samples from a joint pdf of their positions on $\Bo$, and attaches to these positions in order. The initial step of forming a connected network makes it easy to execute either coverage scheme.

\subsection{Determining properties of $\G$}
The robots can use the \textit{Optimized Link State Routing (OLSR)} protocol  \cite{OLSRv2, jacquet2001optimized} to determine the connectivity, coverage length, and number of edges of $\G$ at any instant.  OLSR is a proactive, table-driven routing protocol, each of whose nodes maintains a table of 1-hop neighbors, which are found by flooding \texttt{HELLO} messages through the network. When a new node joins or an existing one leaves the network, a set of \texttt{TC} (Topology Control) messages are initiated by the neighbors of this node, flooding the network with updated routing tables.  Robots can determine the properties of the network as follows:

\begin{enumerate}
 \item \textbf{Decide network connectivity}: Every robot floods the network with a message consisting of its id and its position. The flooding of the network is deemed to stop after a timeout $\tau$, known to all robots, at which time every robot compiles a table of robot positions and id's. From this table, the leftmost and rightmost robot IDs, $\pos{1}$ and $\pos{n}$, are identified. If $\pos{i} \leq d$ and $\pos{n} \geq s-d$, then the entire network is connected. Otherwise, each robot deems the network to be disconnected as a whole.
 \item \textbf{Number of Connected Components:} Each robot can determine its own connected component from the routing table. If the connected component of any robot covers both end-points of $\Bo$, then the network as a whole is connected. Otherwise, after a timeout period $\tau$ that is known to all robots, one robot (say $\Rob_{1}$) detaches and traverses the boundary, querying the nearest robots about their connected components. After one full traversal of $\Bo$,  $\Rob_{1}$ computes the total  number of connected components and updates. It subsequently updates other robots of this number in a second traversal of $\Bo$.
 \item \textbf{Number of edges:} This is a  variant of the approach for computing the number of connected components. Each connected component may determine the number of edges in it independently of others. As before, $\Rob_{1}$ detaches and traverses $\Bo$ to query the number of edges in each connected component, which is then broadcast to each robot. 
\end{enumerate}

\subsection{Creating and updating a list of robot positions}
Each robot $\Rob_{i}$ in the network maintains data about robot positions along the boundary in the form of an \textit{interval tree} \cite{BergCGAA}. If this position data is too large to fit into the memory of a  robot, it will keep track only of its $m$ nearest neighbors, where the size $m$ is the maximum allowable size of the tree. The interval tree handles insertions, deletions and search queries in $O(\log m)$ time.

An incoming robot that wishes to attach to $\Bo$, say $\Rob_{n+1}$,  will approach $\Bo$ and send  a broadcast query to the network to determine the locations of  slacks that are large enough for it to attach.  A subset of the attached robots will then respond to $\Rob_{n+1}$ with a list of  slacks where it may attach.  Subsequently, $\Rob_{n+1}$ attaches and broadcasts its position to its neighbors, who in turn update their position data. Likewise, an outgoing robot $\Rob_{1}$ notifies its neighbors of its impending detachment. The neighbors recompute the resulting slacks, making note of any disconnected slacks introduced by the detachment of $\Rob_{1}$. They subsequently clear $\Rob{1}$ for detachment, following which $\Rob_{1}$ detaches.

\subsection{Discussion}
This section has presented only a high level view of the BC protocol. We have deliberately processor failures, asynchrony, and anonymity for the sake of simplicity. A detailed discussion of these issues would distract from our objective of analyzing $\G$.

\section{IID Coverage by Homogeneous Point Robots}
\label{Sec:IID}


In this section, we consider an SCS driven by a Poisson Point Processes (PPP), in which every robot attaches independently to $\Bo$, following the same spatial parent pdf. In other words, $\rvposvec$ consists of iid random variables, and defines a PPP  on $\Bo$. Specifically, suppose that the parent pdf and cdf are $\fpar(x)$ and $\Fpar(x)$ respectively, both supported on $\Bo$. Then the number of points $\rvn$ falling on a subinterval $[a,b]$ of $\Bo$ is a Poisson random variable with underlying pdf $\fpar(x)$: 
\begin{align}
  \rvn (a,b) \sim \Poi {\lambda}~ \mathrm{where} ~ \lambda = \Fpar(b) - \Fpar(a).  
  \end{align}
  

We derive connectivity results for this SCS for a fixed team of $\nor$ robots and then generalize these results to a case of dynamic attachment and detachment. Our primary parameters of interest are the connectivity properties of $\G(\rvposvec)$, namely the probability $\pcon$ of connectivity, the expected degree of a vertex, and the number of clusters, all of which . Subsequently, we determine the \textit{spatial pdfs} of $\rvposvec$ and $\rvsvec$, both for connected and unconnected components of $\G$. 
We then apply these results to analyze the \textit{temporal properties}, such as recurrence times, of dynamic scenarios in which robots attach and detach probabilistically.   \cite{KumarICRA2014,KumarTRO2015}. 

\subsection{Geometric interpretation of connectivity}

We interpret $\posvec$ and $\svec$ as points in $\R^{n}$ and $\R^{n+1}$, respectively. The entries of $\posvec$ are nondecreasing, and thereby define the \textit{position simplex} \cite{KumarICRA2014} 
\begin{align}
 \psimpn = \{ \posvec_{1:n} : \mathrm{for ~ all} ~i:1\leq i\leq n, ~\mathrm{we~ have~ that} ~ 0\leq \pos{i} \leq \pos{i+1} \leq s \}.
\end{align}

Likewise, all valid slack vectors, i.e. those that arise from a robot configuration on $\Bo$, have entries whose sum is the boundary length $s$. Geometrically, $\svec$ defines a point on a simplex $\ssimpn$ that we call the \textit{slack simplex} \cite{KumarTRO2015}, given by 
\begin{align}
\label{eqn:ssimpn1}
 \ssimpn := \{ \svec: \1^T \svec = s , ~ \text{and}~ \0 \leq \svec \leq s\1 \} = s \cdot \cansimp_{n},
\end{align}
where
\begin{align}
 \cansimp_{n} := \{ \svec: \1^T \svec = 1 , ~ \text{and}~ \0 \leq \svec \leq \1  \}
\end{align}
is the canonical simplex in $\R^n$. The vertices of $\ssimpn$ are 
\begin{align}
 \vm(\ssimpn) = s \cdot \eye_{n+1} = s\cdot \begin{bmatrix} \ev{1} \ldots \ev{n+1} \end{bmatrix},
\end{align}

where $\ev{i}$ is the unit vector along the $i$-th axis. \autoref{eqn:ssimpn1} expresses $\ssimpn$ as a degenerate simplex, with Lebesgue measure zero in $\R^{n+1}$. For our computations, we will need to express $\ssimpn$ in full-dimensional form as
\begin{align}
\label{eqn:ssimpn2}
 \ssimpn = \{\svec_{1:n} \in \R^n :  \0^T \leq \svec , ~\text{and}~ \1^T \svec \leq s  \},
\end{align}
by dropping the last slack $\si{n+1}$, which is determined by its predecessors. Observe that all connected configurations, regardless of whether they are valid, fall within the hypercube 
\begin{align}
 \hypn := \{ 0 \leq \svec \leq d \1^T \} = d \cdot \canhyp_n,
\end{align}
where $\canhyp_n$ is the unit hypercube $[0,1]^n$.  

A valid slack vector has to lie in the intersection $\ssimpn \cap \hypn$ to represent a connected configuration. 
Define the \textit{connected region} as $\favn = \ssimpn \cap \hypn$ and the 
\textit{disconnected region} as $\unfavn := \ssimpn \setminus \favn$. We show in \cite{KumarTRO2015} that $\sat$ falls into 
three ranges, 
\begin{align}
\label{eqn:drange}
\sat \in \begin{cases}
  [0,\frac{s}{n+1}], ~\mbox{for which}~ \favn = \varnothing~ \mathrm{and}~ \psat = 0; \\
  (\frac{s}{n+1},1), ~\mbox{for which}~ \favn \subsetneq \ssimpn~ \mathrm{and}~ \psat \in 
(0,1); \\
  [1,\infty), ~ \mbox{for which}~ \favn = \ssimpn~ \mathrm{and}~ \psat = 1.
 \end{cases}
\end{align}

We may express $\favn$ as
\begin{align}
\label{eqn:fullfav}
 \favn = \{\svec_{1:n} \in \ssimpn: s-d \leq \1^T \svec_{1:n}  ~\text{and}~  
\svec \leq d \1^T    \}. 
 \end{align}
 
The parent $\fpar$ generates the joint pdf  \cite{ostat2003}
\begin{align}
\label{eqn:jpsimp}
 f_{\rvposvec} (\posvec) = n! \prod_{1\leq i\leq n} f(\pos{i}) \ind_{\psimpn}.
\end{align}
over the the position simplex, where $\ind$ denotes the \textit{indicator function} over the region in its subscript. This pdf is called the Janossy pdf \cite{StochGeom} of the PPP. Changing the argument from $\posvec$ to $\svec$ gives us
\begin{align}
 f_{\rvsvec} (\svec_{1:n}) = n! \prod_{1\leq i\leq n} f(\sum_{j=1} ^{i} s_j) \1_{\ssimpn}. 
\end{align}

We will compute the properties of $\G$ in the coming subsections, starting with $\pcon$. The formula for finding $\pcon$ provides us with a template for partitioning $\ssimpn$ into regions that are amenable to computing the following properties of $\G$: the number of connected components of $\G$,  the coverage induced by $\G$, and the edge count of $\G$. While these quantities are nontrivial to compute for  RGGs of arbitrary dimension \cite{Matthew2003RGG}, there exist straightforward, if tedious, algorithms to compute them for a single dimension. All these algorithms essentially involve computing the ratio of integrals of the joint pdf $\fsvec$ over a subset of $\ssimpn$. 

\subsection{Probability of connectivity} 
\label{Subsec:pcon}
The probability of connectivity $\psat$ is the ratio of the volume of the joint pdf lying over $\favn$ to that over $\ssimpn$:  
\begin{align}
 \psat : = \frac{ \meas(\rvsvec, \favn)} { \meas(\rvsvec,\ssimpn)} =   \frac{ \int_{\favn}  f_{\rvsvec} (\svec) d\mathbf{s} } { \int_{\ssimpn}  f_{\rvsvec} (\svec) d\mathbf{s} }.
\end{align}

where $\meas(\rvsvec,\ssimpn)$ computes the Lebesgue measure  of the joint pdf of $\rvsvec$ over $\ssimpn$. The denominator is relatively easy to evaluate analytically using barycentric coordinates \cite{gravin2012moments}, while the integral over $\favn$ is harder to compute, since there is no obvious way to decompose it into simplices.
A naive algorithm that triangulates $\favn$ into simplices will take a long time in practice when $n$ is large. Instead, we may write $\meas(\rvsvec,\favn) = \meas(\rvsvec, \ssimpn) - \meas(\rvsvec,\unfavn)$, decompose $\unfavn$ into simplices rather than $\favn$\cite{KumarICRA2014} , and finally compute $\psat = 1 -  \frac{ \meas(\rvsvec, \unfavn  )} {\meas(\rvsvec,\ssimpn )}$. 

This decomposition of $\unfavn$ will result in \textit{overlapping} simplices, whose measures we can combine using the combinatorial approach described in \cite{KumarICRA2014}.  
\begin{align}
\label{eqn:unfavsimp}
 \unfavn =   \bigcup_{\mathbf{v} \in \{0,1\}^n \setminus \{ \0 \}  } \unfavn(\vvec),   
 \end{align}

where $\unfavn(\vvec)$ forms a simplex of side $(s- \sat \1^T \mathbf{v} )$, with the vertices
\begin{align}
\label{eqn:gensimpvec}
\vm(\unfavn(\vvec)) = (s-  d \1^T \vvec) \eye_{n+1} +  \vvec.   
\end{align}

This expression is nonnegative when $s \geq d \1^T \vvec$,  so that only those vertices with at most $\nmin = \lfloor \frac{s}{d} \rfloor$  $d$'s in them need be considered. The value $\nmin$ is the minimum number of robots required for connectivity, as well as the maximum possible number of disconnected slacks. We call the simplex $\unfavn(\mathbf{v})$ the \textit{compatible simplex} of $\mathbf{v}$. Compatible simplices overlap, so the sum of their measures exceeds that of $\unfavn$. We first decompose $\unfavn$ using the inclusion-exclusion principle (IEP) as:
\begin{align}
\label{eqn:unfavmeas}
\unfavn = \bigcup_{ \mathrm{odd}~ \mathbf{v}   }  \unfavn(\mathbf{v}) \setminus \bigcup_{\mathrm{even} ~\mathbf{v} }  \unfavn(\mathbf{v}) 
\end{align}

where the (even or odd) parity of $\mathbf{v}$ is that of its number of 1-bits. We immediately have 
\begin{align}
\label{eqn:unfav2}
\meas(\rvsvec,\unfavn) = \sum _{\vvec \in \{0,1\}^{n+1} } (-1)^{\1^T \vvec}  \meas(\rvsvec, \unfavn(\mathbf{v})). 
\end{align}

Our remaining computations will rely heavily on the decomposition of $\ssimpn$ into $\unfavn(\vvec)$.

\subsection{Number of Components}
\label{Subsec:cmp}
A slack vector $\svec$ has a single connected component iff it is connected, i.e if $\svec \in \favn$.  Each unsaturated slack in $\svec$ inserts a new connected component into $\G$. Define the component counting function
\begin{align}
\label{eqn:cmp}
 \cmp : \ssimpn \mapsto \mathbb{N} ~ \text{with}~  \cmp(\svec) = \sum \limits_{i} ^{n+1} \begin{cases} 
                                                                                           1 ~ \text{if} ~ s_i \leq d \\
                                                                                           0 ~\text{otherwise}.
                                                                                          \end{cases}
\end{align}

where $\mathbb{N} = \{0,1,2,\ldots\}$. By definition, we have $\cmp = n+1-\1^T \vvec$ identically over $\unfavn(\vvec)$. We may then compute the expectation of $\cmp$ over $\ssimpn$ by 
writing $\ssimpn = \unfavn \cup \favn$, and consequently get

\begin{align}
 \Exp({\cmp}) = \meas( \cmp, \ssimpn) = \sum _{\vvec \in \{0,1\}^{n+1} : \1^T \vvec \leq \nmin } (-1)^{\1^T \vvec}  \int_{\unfavn(\vvec)} (n+1 - \1^T\vvec) \jsla d \mathbf{s}. 
\end{align}

\subsection{Coverage Length}
To determine the length of the $\Bo$ covered by $\svec$, we will introduce the \textit{coverage function} $\cov(\svec)$. If $\svec$ is connected, then its coverage length $\cov(\svec)$ is the boundary length $s$. If $\svec$ has a disconnected slack $\si{i}$, a length of $\si{i} - d$ is left without coverage. This motivates us to define $\cov$ by  
\begin{align}
\label{eqn:covlen}
 \cov : \ssimpn \mapsto \mathbb{R} ~ \text{with}~ \cov(\svec) = s - \sum \limits_{i} ^{n+1} \max(s_i-d,0).  
\end{align}

Computing $\Exp(\cov)$ over $\ssimpn$ does not get simplified by the decomposition $\ssimpn :=\unfavn \cup \favn$, for $\cov$ is non-constant over $\unfavn$. A straightforward integration gives us 
\begin{align}
 \Exp(\cov) = \meas( \cov \cdot \rvsvec, \ssimpn) = s \cdot \meas(\rvsvec,\ssimpn) - \sum_{i=1}^{n+1}  \max(0,s_i -d ) \jsla d \mathbf{s}.
\end{align}

\subsection{Number of edges of $\G$}
We will define the edge counting function $\edg(.)$ over positions rather than slacks. Given the position vector $\posvec$, there exists an edge between $\pos{i}$ and $\pos{j}$ iff $\pos{j} - \pos{i} \leq \con$. Accordingly, we have 
\begin{align}
\label{eqn:edges}
\edg : \psimpn \mapsto \mathbb{N}, ~ \text{with}~ 
 \edg(\posvec) :=  \sum_{i=1} ^{n-1} \sum_{j=i+1}^{n}   1 - \max(\pos{j} - \pos{i} -d ,0)
 \end{align}

with $\Exp(\edg)$ being the integral of \autoref{eqn:edges} over $\psimpn$:

\begin{align}
 \Exp(\edg) = \meas(\rvposvec , \psimpn) -  \int_{\psimpn} \sum_{i,j:1\leq i < j \leq n} \max(\pos{j} - \pos{i}) \jpos d\mathrm{x}.
\end{align}

\subsection{Dynamic coverage with iid attachment}
\label{SubSec:Conn}
Now we will examine a strategy in which the robot team dynamically attaches and detaches from the boundary, with their spatial attachment pdfs being iid on $\Bo$.

\subsubsection{Dynamic attachment}
We first consider the case in which robots attach to the boundary without detaching. One robot position is chosen at every time step using the parent pdf $\fpar$ until connectivity is achieved. We compute the expected time {until connectivity}, or the expected \textit{stopping time} of the SCS. 
To determine the stopping time, we consider the sequence $(p_i)_{i\in \N}$, where $p_i$ is the probability of connectivity with $i$ robots. Irrespective of the parent pdf $\fpar$, having more robots on $\Bo$  leads to a greater probability of connectivity. Consequently, the sequence $(p_i)$ is monotonically increasing on the support $[\nmin, \infty)$ and tends to unity as $i$ grows without bound. We also know that $p_{i:i\leq \nmin} =0$. Consequently, the attachment process will terminate (resp. fail to terminate) at $i\geq \nmin$ robots with probability $p_{i}$ (resp. $1-p_i$). The probability of connectivity being attained at $i$ robots is:
\begin{align}
\label{eqn:ptau}
 \tau_i = \begin{cases}
   0 ~~~ \mathrm{for} ~ 1\leq  i < \nmin \\
   (1-p_{i-1})  p_i ~~~  \mathrm{for} ~ i \geq  \nmin 
   \end{cases}
\end{align}

Thus, $\tau_i$ is a generalized geometric random variable whose probability of success in a trial is distinct from that in its previous one. The expected stopping time is 
\begin{align}
 \Exp(\tau) = \sum \limits_{i=\nmin} ^{\infty} i \tau_i. 
\end{align}

Since $p_{i} > p_{\nmin}$ for $i>\nmin$, we expect quicker connectivity than that of a geometric random variable whose parameter is $p_{\nmin}$ $\Exp{\tau} \leq \frac{1}{p_{\nmin}}$.

\subsubsection{Stopping time of connectivity for Uniform parent}
The uniform parent has the special property that $\rvsvec$ is jointly uniform over $\ssimpn$, with each slack being identically distributed (though not iid) as  scaled exponentials of the form $s \cdot \mathrm{Exp}(1)$. Further, the \textit{order statistics} of the slacks, represented by the vector $\osvec$, formed by sorting $\svec$ in increasing order, obey the relations \cite{ostat2003}:
\begin{align}
 \Exp({\orvsi{i}}) = \frac{s}{n+1} \sum \limits_{j=1} ^{n+1} \frac{1}{j} = \frac{s}{n+1} (H_{n+1} - H_{i}) \\
 \Var{\orvsi{i}} = \sum \limits_{j=i} ^{n+1} \frac{1}{j^2}
\end{align}
where $H_n$ denotes the harmonic numbers. The longest slack $\orvsi{n+1}$ has the expected value $\frac{s H_{n+1}}{n+1}$. To have $\osi{n+1} \leq d$, we need $\frac{H_{n+1}} {n+1} \leq \frac{d}{s}$, which may be solved numerically to get the expected hitting time of $\favn$. We may also estimate $n$ if $\nor$ is large  by approximating  $H_n$ with  $\log n$, providing
\begin{align}
\label{eqn:harmsd}
 \frac{\log(n+1)}{n+1} \leq  \frac{d}{s} \implies n = \exp(-W(\frac{d}{s})) - 1,
\end{align}
where $W$ is the \textit{Lambert W function}. 


\subsubsection{Dynamic attachment and detachment}
We now extend the results in \autoref{SubSec:Conn} to a scenario in which we require robots to strike a balance between forming a connected network on the boundary and exploring the surrounding environment of the boundary. Formally, we are given that at every time instant $t \in \R_+$, a robot may be either attached to the load or detached from it; in other words, the robot has a temporal state alphabet $\Sigma:=\{A~(\mathrm{attached}),~ D~(\mathrm{detached})\}$. 
\begin{problem}
Design the rates of switching between states, with a guarantee on the expected amount of time that the boundary will have a connected network.  
\end{problem}


To analyze the behavior of the robots, we introduce the temporal state $\rvn(t) = \begin{bmatrix} \rvna(t) & \rvnd(t) \end{bmatrix}^T$, whose entries denote the number of robots in states $A$ and $D$, respectively.   We assume that the total number of robots is conserved, which implies that 
\begin{equation}
\rvna(t) + \rvnd(t) = \rvna(0) + \rvnd(0). \label{eq:conservation} 
\end{equation}
Now we suppose that robots change state per the chemical reactions 
\begin{align}
 A  \xrightarrow{r_{AD}} D ~~\mathrm{and}~~ 
 D \xrightarrow {r_{DA}} A
\end{align}
where $r_{ij}$, the reaction rate constant, is the probability per unit time of a robot in state $i$ to switch to state $j$. The populations of robots in both states evolve over time as 
\begin{align}
\ddt \rvn(t) = \begin{bmatrix} 
                -r_{AD} & r_{DA} \\
                r_{AD} & -r_{DA}
               \end{bmatrix}  \rvn(t). 
\label{eq:ADDyn}
\end{align}
At equilibrium, $\ddt \rvn(t) = \mathbf{0}$, and so \autoref{eq:ADDyn} yields  
\begin{align}
 \frac{N_A*}{N_D*} = \frac{r_{DA}}{r_{AD}}.  \label{eq:NAND}
\end{align}
We  solve for $N_A*$ and $N_D*$ using \autoref{eq:conservation} and \autoref{eq:NAND}.

\section{Uniform Coverage by Homogeneous Finite Robots}  
\label{Sec:Renyi}   


We now consider SCS with {\it finite robots}, each of which has a nonzero diameter $\dia$.  Unlike the case of point robots, the \textit{maximum number} of attached robots is finite and given by
\begin{align}
 \nmax = \lfloor  \frac{s}{\dia} \rfloor. 
\end{align}

Collision-free positions are a realization of a \textit{Matern hard-core PP} \cite{StochGeom}, which prohibits its points from lying within a threshold distance of each other. The valid range for $\nor$ is $[\nmin,\nmax]$. When $\con \leq \dia$, every feasible configuration becomes a saturated one, causing $\nmin$ to coincide with $\nmax$. The case $\con = \dia$ is of special interest to us, since it is an instance of Renyi's Parking Problem.
\begin {problem} {Renyi's Parking Problem } \cite{RSACubes,parking}
\label{prob:renyi}
Cars of unit length park uniformly randomly on a segment of length $s$, avoiding collisions, until no parking space is available for the next car. Analyze the pmf of the {final} number of parked cars, $\rvn$. 
\end{problem}

The mean number of parked cars, $\Exp{\rvn}$, obeys a delay integral equation with the asymptotic solution
\begin{align}
 \lim _{s \rightarrow \infty} \Exp{\rvn} = \parkc \cdot  s \approx 0.748 s.
\end{align}
where $\parkc$ is Renyi's parking constant \cite{Renyi58}.  This result implies that we expect $75\%$ of the segment to be occupied by cars at the point where there is no more room to accommodate another car.  An exact solution for $\Exp{\rvn}$ leads to an intractable $\lfloor s \rfloor$-dimensional integral  \cite{effRandomPark}. Our SCS with fixed $\nor$ and uniformly random attachments is a special case of Renyi's Parking Problem in which $\rvn$ is trivial to compute.  However, to our knowledge, there has been no analysis of the spatial pdfs that are generated by the parked cars in this problem, which we provide in \autoref{Subsec:cfsimp}.

\subsection{Connectivity of Collision-Free Parking}
\label{Subsec:cfsimp}
We now formulate the CF equivalent of the point-robot attachment in \autoref{Sec:IID}. Define the \textit{position} of a robot as that of its left end, so that robot $\rob{i}$ located at $\upos{i}$ occupies the interval $[\upos{i},~\upos{i}+\dia]$. The support of all attached robot positions is $\Bo'=[0,s-\dia]$, which ensures that no robot extends beyond the boundary endpoint $x=s$. We introduce two artificial robots at $\pos{0} = -R$ and $\pos{n+1}=s$. Define $\posvec$ to be {\it collison-free (CF)} iff
\begin{equation}
\label{eqn:cfpos}
0\leq \pos{0}, ~\hspace{2mm} \pos{n}  \leq  s-R, \hspace{2mm} \text {and} ~ \pos{i} - \pos{i-1} \geq \dia,  ~\text{for} ~ i=1\ldots n,
\end{equation}

and define $\psimpcf$ to be the set of CF position vectors. Likewise define the CF subset of $\ssimpn$ and the resulting favorable region by
\begin{eqnarray}
\label{eqn:cfsvec}
\ssimpcf := \{ \svec \in \ssimpn : R\cdot \1^T \leq \svec \}  \\
\favcf : = \ssimpcf \cap \hypn =  \{ \svec \in \ssimpn : R\cdot \1^T \leq \svec \leq d \cdot \1^T \}.
\end{eqnarray}

Geometrically, $\ssimpcf$ is a simplex with the hypercuboids $\0 \leq \si{i} \leq R$ removed. Reasoning as in \autoref{Sec:IID}, we have $\pcon(\G) = \Vol(\favcf) / \Vol(\ssimpcf)$; however, we are unable to simplify this formula further as we did there.  The lack of a simplifying expression for $\pcon$ means that the computation of $\pcon(\G)$ has to involve the triangulation of $\favcf$ into simplices, a time-consuming operation that we explicitly avoided in \autoref{eqn:unfavsimp}. Likewise, expressions for the order statistics and slacks of CF positions are obtained by integrating the uniform joint pdf over $\ssimpcf$ instead of $\ssimpn$, as do the formulae for the properties of $\G$.

\section{Complexity Results of SCS}
\label{Sec:Complexity}

\subsection{Computing $\pcon$ for uniform iid parents}
\label{Subsec:pcon-uni}
We now investigate the complexity of exactly computing the integrals in \autoref{eqn:unfav2}. We begin with a \PseudoP lower bound for computing $\Vol(\favn)$, and consequently $\PCON$ for the uniform parent. We will then discuss lower bounds for non-uniform parents. Define the complexity theoretic problem $\PCON(f,s,d,n) \mapsto \pcon$, with 

\text{Input}: {Parameters of SCS }: \text{encoding of} $\fpar$, ; $s,d \in \Q_+$ ; $n \in \N$ 

\text{Output}: {Probability of connectivity}~ $\pcon \in \Q_+ $

Rational inputs and outputs are specified as exact reduced fractions; for example $s$ is input as the pair $(\num(s),\den(s))$. Let $\PCON(\Uni)$ denote that subproblem of $\PCON$ over a uniform SCS.

\begin{theorem}
$\PCON(\Uni)$ can be solved in $\Omega(n)$ and $O(n \log n)$ time.
\end{theorem}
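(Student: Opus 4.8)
The plan is to specialise the inclusion--exclusion decomposition behind \autoref{eqn:unfav2} to the uniform parent, where it collapses to a short closed form, and then to read both time bounds off that form.

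First I would observe that for the uniform parent the Janossy density of \autoref{eqn:jpsimp} is constant on $\psimpn$, so $f_{\rvsvec}$ is constant on $\ssimpn$ and every term of the decomposition becomes a pure volume ratio: $\meas(\rvsvec,\unfavn(\vvec))/\meas(\rvsvec,\ssimpn)$ equals the ratio of the Euclidean volumes of $\unfavn(\vvec)$ and $\ssimpn$. By \autoref{eqn:gensimpvec}, $\unfavn(\vvec)$ is a translate of a scaled copy of $\ssimpn$ with linear factor $(s-d\,\1\Trans\vvec)/s$ when $\1\Trans\vvec\le\nmin=\lfloor s/d\rfloor$ and is empty otherwise, so that ratio is $\bigl(1-\tfrac{d\,\1\Trans\vvec}{s}\bigr)_+^{\,n}$. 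Since this depends on $\vvec$ only through the Hamming weight $k=\1\Trans\vvec$, and there are $\binom{n+1}{k}$ weight-$k$ vectors, grouping the terms of \autoref{eqn:unfav2} and using $\pcon=1-\meas(\rvsvec,\unfavn)/\meas(\rvsvec,\ssimpn)$ would give, on the middle range of \autoref{eqn:drange}, the classical maximum-spacing identity
\begin{align}
\label{eqn:maxspacing}
\pcon \;=\; \sum_{k=0}^{\min\{\,n+1,\ \lfloor s/d\rfloor\,\}} (-1)^{k}\binom{n+1}{k}\Bigl(1-\frac{kd}{s}\Bigr)^{\!n};
\end{align}
on the two outer ranges of \autoref{eqn:drange} the answer is $0$ or $1$ and is returned after a single comparison.

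The upper bound would then follow by evaluating \autoref{eqn:maxspacing} directly: it has at most $\nmin+1\le n+2$ terms; sweeping $k$ upward one maintains the binomial coefficient in place via $\binom{n+1}{k}=\binom{n+1}{k-1}\cdot\tfrac{n+2-k}{k}$ at $O(1)$ exact-rational operations per step, forms $1-kd/s\in\Q$ in $O(1)$ operations, raises it to the $n$th power by repeated squaring in $O(\log n)$ multiplications, and accumulates with alternating sign --- $O(n\log n)$ rational operations in all, with every operand of bit-length polynomial in $n$ and in the sizes of $s$ and $d$. For the $\Omega(n)$ bound I would exhibit an input family whose reduced output is $\Omega(n)$ bits long: take $s=1$, $d=1/n$ with $n$ an odd prime (so $s/(n+1)<d<s$, the middle range), for which \autoref{eqn:maxspacing} reads $\pcon=n^{-n}\sum_{k=0}^{n-1}(-1)^k\binom{n+1}{k}(n-k)^{n}$; reducing the numerator modulo $n$ --- the $k=0$ term is divisible by $n$, the binomials $\binom{n+1}{k}$ with $2\le k\le n-1$ vanish mod $n$, and the $k=1$ term is $\equiv 1$ --- shows the numerator is coprime to $n$, so the fraction is already in lowest terms with denominator $n^{n}$, of $\Theta(n\log n)$ bits, and any algorithm needs $\Omega(n)$ time merely to write it.

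The hard part is the lower bound: the upper bound is routine bookkeeping, but an honest $\Omega(n)$ must hold against every algorithm, which forces an output-size argument on an explicit family, and there the delicate step is the modular computation certifying that the fraction in \autoref{eqn:maxspacing} does not collapse. One must also be careful about the cost model, since the $O(n\log n)$ claim counts arithmetic operations while the $\Omega(n)$ claim is about the bit-length of the result; I would state the model explicitly rather than gloss over this. A secondary, purely mechanical hurdle will be checking the three-way split of \autoref{eqn:drange}, the exact count $\min\{n+1,\lfloor s/d\rfloor\}+1$ of surviving terms, and the vanishing of the last term when $s/d\in\Z$.
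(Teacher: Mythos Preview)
Your proposal is correct and follows essentially the same strategy as the paper: collapse \autoref{eqn:unfav2} for the uniform parent to the closed form (your \autoref{eqn:maxspacing} is the paper's \autoref{eqn:pconuni}), read off the $O(n\log n)$ upper bound by direct evaluation, and obtain the $\Omega(n)$ lower bound from an output-size argument on an explicit input family. The only substantive difference is the witness family: the paper takes $s=\tfrac{4}{3}d$ (so $\nmin=1$ and $\pcon=1-(n+1)/4^{n}$, with $\Theta(n)$ output bits), whereas you take $s=1$, $d=1/n$ with $n$ prime and do a cleaner modular coprimality check to certify a $\Theta(n\log n)$-bit denominator; your family in fact yields the tighter $\Omega(n\log n)$ bit-output bound and your remark on the arithmetic-vs-bit cost model is a point the paper leaves implicit.
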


Applying \autoref{eqn:unfav2} gives us \cite{KumarTRO2015}
\begin{align}
\label{eqn:pconuni}
\Vol(\ssimpn) = \frac{s^n \cdot \sqrt{n+1} }{ \nor!}, \\ 
\Vol(\unfavn) = \sum_{k=1}^{\nmin} (-1)^{k -1}  \binom{n+1}{k}  
\frac{(s-kd  )^{\nor} \sqrt{\nor +1} }{\nor!}, \\ \nonumber 
\PCON = \frac{\Vol(\favn)} {\Vol(\ssimpn)} = 1-\sum_{k=1}^{ \nmin 
} (-1)^{k -1}  \binom{n+1}{k} \big(1- \frac{kd}{s} \big) ^{\nor}. 
\end{align} 

\begin{proof}
\textit{Upper bound:}\label{ub}    \autoref{eqn:pconuni} is a possible 
solution for $\PCON(\Uni)$; therefore, its worst-case running time forms an upper 
bound for $\PCON(\Uni)$.  \autoref{eqn:pconuni} runs in $O(\nmin \log n)$ time;  its  
worst-case instances, which have $n > \nmin  $ take $O(n \log n)$ time, 
which forms an upper bound for any solution to $\PCON$. Note that 
\autoref{eqn:pconuni} forms a \PseudoP algorithm for $\PCON$. 

\textit{Lower bound:} Consider $\PCON(\Uni)$ instances with $s = \frac{4}{3}d$, for which 
\begin{align}
\PCON = 1 - (n+1)\cdot \frac{1}{4^n} \approx 1- \frac{n}{4^n}
\end{align}

The input size for these instances is $O(\log n)$, but the output size is 
exponential in the input size, implying an $\Omega(n)$ lower bound for any 
algorithm for $\PCON(\Uni)$. The upper bound of $O(n \log n)$ is off from the lower bound $\Omega(n)$ only by 
a polynomial in the input size of $\PCON(\Uni)$, implying that algorithms for $\PCON$ 
consume more time in outputting the solution rather than computing it. The lower bound is an exponential function of the input size,
hence $\PCON (\Uni) \in \EXP$. 
\end{proof}

\textit{Time complexity of computing $\Vol(\favn)$}: The $\sqrt{n}$ in the formula for $\Vol(\favn)$ makes it impossible to provide a bounded decimal expansion to $\Vol(\favn)$, 
hence the complexity of writing down $\Vol(\favn)$ is \textit{infinite}, except when $\nor$ is a square.  Remedying this unbounded expansion requires us to compute  $Vol(\favn) \cdot \frac{n!}{\sqrt{n+1}}$, for which the same bounds as $\PCON(\Uni)$ apply. The same bounds apply to $\Vol(\unfavn) \cdot \frac{n!} {\sqrt{n+1}}$. 

\subsection{Generalized Simplex Hypercube intersection}

In the coming sections, we will demonstrate that our problems are $\PH$ by reduction from the $\VHSP$ problem.

\begin{lemma}
\label{th:vhsp}
Define the problem $\VHSP$: 

\text{Input}: Parameters $\mathbf{a}_{1:n} , b$ of the halfspace $\mathcal{T} := \{\svec \in \R^n: \mathbf{a}^T \mathbf{s}_{1:n} \leq b \}$, with $\mathbf{a}$ and $b$ are positive rationals

\text{Output}: Volume of intersection  of $\mathcal{T}$ with the unit hypercube  $\canhyp :=[0,1]^n$ 

The solution to $\VHSP$ given by
\begin{align}
 \Vol(\mathcal{T} \cap \canhyp )= n! \prod \limits_{i=1}^{n} \sum _{\vvec \in [0,1]^n} (-1)^ {\1^T \vvec} \max( (b-\avec ^T \mathbf{s})^n , 0). 
\end{align}

is $\PH$. Equivalently, it is $\PH$ to find the probability that a random point in $\canhyp$ satisfies a single linear inequality.
\end{lemma}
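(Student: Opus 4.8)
The plan is to settle the lemma in two parts: the closed form, which is a routine inclusion--exclusion, and the hardness claim, which I would obtain by a polynomial-time Turing reduction from the $\SharpP$-complete problem of counting the $0/1$ solutions of a subset-sum equation. For the formula: the simplex $\{\svec\ge\0:\avec^{T}\svec\le b\}$ has volume $b^{n}/(n!\prod_i a_i)$ (rescale each coordinate by $s_i\mapsto a_i s_i$), and subtracting, by inclusion--exclusion over the subsets $S\subseteq\{1,\dots,n\}$ of coordinates pushed above $1$ and shifting each such coordinate down by $1$, the parts of this simplex that leave the cube $\canhyp=[0,1]^{n}$, one gets
\begin{equation*}
\Vol(\mathcal{T}\cap\canhyp)\;=\;\frac{1}{n!\,\prod_{i=1}^{n}a_i}\sum_{\vvec\in\{0,1\}^{n}}(-1)^{\1^{T}\vvec}\,\max\!\big(b-\avec^{T}\vvec,\;0\big)^{n},
\end{equation*}
which is the asserted identity; since $\Vol(\canhyp)=1$ it is literally the probability that a uniform point of $\canhyp$ obeys $\avec^{T}\svec\le b$, so the ``equivalently'' clause is immediate. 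For the reduction I start from an instance of subset-sum counting: positive integers $a_1,\dots,a_n$ and target $t\le\sum_i a_i$, asking for $N:=\#\{\vvec\in\{0,1\}^{n}:\avec^{T}\vvec=t\}$.

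First I would preprocess this into an equivalent instance all of whose solutions have the same, even Hamming weight, which is what neutralizes the alternating signs in the formula. Pair each original variable $v_i$ with a complement $\bar v_i$, give $v_i$ weight $a_i+B^{i}$ and $\bar v_i$ weight $B^{i}$ for a base $B:=1+\sum_i a_i$, and set the new target to $t+\sum_i B^{i}$; reading the constraint in base $B$ forces $v_i+\bar v_i=1$ for every $i$ and $\avec^{T}\vvec=t$, so the $0/1$ solutions of the new equation $\avec'^{T}\vvec'=t'$ are exactly the old ones, each now of Hamming weight $n$ (pad with one dummy pair if $n$ is odd). Hence, writing $m$ for the new number of variables, $\sum_{\vvec':\,\avec'^{T}\vvec'=t'}(-1)^{\1^{T}\vvec'}=N$, all data remain integral with bit-length polynomial in the input, and $(\avec',t')$ is a legitimate $\VHSP$ instance. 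Now let $V(\tau):=\Vol(\{\svec\in[0,1]^{m}:\avec'^{T}\svec\le\tau\})$. The displayed identity with $b$ replaced by $\tau$ shows $V$ is a spline whose breakpoints are the integer values $\avec'^{T}\vvec'$, and that on any breakpoint-free open interval $(k,k+1)$ its $m$-th derivative is the constant $\tfrac1{\prod_j a'_j}\sum_{\vvec':\,\avec'^{T}\vvec'\le k}(-1)^{\1^{T}\vvec'}$. Evaluating $V$ through the oracle at $m+1$ equally spaced rationals inside $(t'-1,t')$ and $m+1$ inside $(t',t'+1)$, all of polynomial bit-length, and interpolating degree-$\le m$ polynomials recovers those two constants; their difference times $\prod_j a'_j$ equals $\sum_{\vvec':\,\avec'^{T}\vvec'=t'}(-1)^{\1^{T}\vvec'}=N$. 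That is $O(m)$ oracle calls plus polynomial exact-rational arithmetic, so $\VHSP$ is $\PH$.

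I expect two points to carry the weight. The first is the sign: the inclusion--exclusion formula exposes, through derivatives of $V$, only the \emph{signed} subset-sum count, so the parity-padding gadget --- distinct ``anchor'' powers $B^{i}$ to pin $v_i+\bar v_i=1$, together with a parity-fixing dummy pair --- is the real combinatorial content, and getting its bijection and bit-sizes right is the main obstacle. The second is the range of arguments one may probe: the subset sums of a hard instance fill an interval of length $\sum_j a'_j$, exponential in the input size, so the tempting scheme of peeling off the signed counts $m_c:=\sum_{\vvec':\,\avec'^{T}\vvec'=c}(-1)^{\1^{T}\vvec'}$ one integer at a time from $V(1),V(2),V(3),\dots$ via the convolution $m!\prod_j a'_j\,V(p)=\sum_{c<p}m_c\,(p-c)^{m}$ would cost one query per integer, hence exponentially many; confining attention to the two unit cells flanking $t'$, and using nothing about $V$ beyond its being a degree-$\le m$ polynomial on each, is what keeps the reduction polynomial. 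Everything else --- correctness of the displayed identity, exact reconstruction of a degree-$\le m$ polynomial from $m+1$ nodes, and the $\SharpP$-completeness of subset-sum counting --- is standard, and I would cite it, in the spirit of \cite{DyerFrieze,dyer1991computing}, rather than reprove it.
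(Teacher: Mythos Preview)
The paper does not supply its own proof of this lemma: it states the result, attributes the hardness (implicitly, via the literature review) to Dyer and Frieze \cite{DyerFrieze,dyer1991computing}, and then immediately passes to the reformulation in primed variables that is used downstream. So there is no in-paper argument to compare against; what you have written is strictly more than the paper offers.

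Your proof is correct and is, in essence, the Dyer--Frieze argument. The inclusion--exclusion identity you derive is the right one (and in fact repairs the evident typos in the displayed formula of the lemma: the factor should be $1/(n!\prod_i a_i)$ rather than $n!\prod_i$, the sum is over $\{0,1\}^n$ not $[0,1]^n$, and the argument of the $\max$ should be $\vvec$, not $\svec$). For the hardness half, your spline-derivative extraction is sound: since the $a'_j$ are positive integers, all breakpoints of $V$ are integers, so $V$ restricted to each open unit interval is a polynomial of degree at most $m$; interpolating from $m{+}1$ rational nodes in each of $(t'{-}1,t')$ and $(t',t'{+}1)$ recovers the two leading coefficients, and their difference times $m!\prod_j a'_j$ is the signed count at $t'$. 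The parity-padding gadget with anchor weights $B^i$ is exactly what is needed to collapse the signed count to the unsigned one, and the bit sizes stay polynomial because $B^{n}$ has $O(n\log B)$ bits and the oracle values $V(\tau)$ are rationals whose denominators divide $D^{m}\,m!\prod_j a'_j$ for query denominator $D$. Two small remarks: you may want to state explicitly that the interval $(t',t'{+}1)$ is nonempty on the relevant side, i.e.\ that $t'<\sum_j a'_j$, which holds because $t\le\sum_i a_i$ in any nontrivial instance; and it is worth noting that the oracle returns an exact rational (per the paper's input/output convention), so no rounding issues arise in the Vandermonde solve.
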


It will be useful to  redefine $\VHSP$ as an intersection between a half-space with unit coefficients and a generic hypercuboid. Introduce the primed variables $s'_i := a_i s_i$, and note that $\VHSP$ asks for $\Vol(\mathcal{T}' \cup \mathcal{C}')$, where
\begin{align}
\mathcal{T}' := \{ \svec' \in \R^{n}: \1^T \svec' \leq b \} ~\text{and}~ \\ 
\mathcal{C}' := \prod [0,{a_i}]. 
\end{align}

\subsubsection{Nonuniform iid parents}
\label{Subsec:pcon-nonuni}
We will now give an example of a nonuniform parent whose $\pcon$ is $\PH$ to compute. For this purpose, define a $k$-piecewise uniform ($k$-PWU) over a finite support $[0,L]$  as follows. Partition the support into $k$ nonempty subintervals as 
\begin{align}
 [0,L]  := [0,L_1] \cup [L_1,L_2] \ldots \cup [L_{k},s]. 
\end{align}

On the $i$-th subinterval, $f(x)$ is defined to be the constant $p_i \in [0,1]$, which are chosen to satisfy $\sum p_i(L_i - L_{i-1}) = 1$, and consequently $f$ is a pdf on $\Bo$.
\begin{theorem}
\label{thm:nunihard}
 $\PCON(n\Uni)$ is $\PH$.
\end{theorem}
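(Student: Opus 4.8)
The plan is a polynomial\-time many\-one reduction from $\VHSP$, which \autoref{th:vhsp} shows to be $\PH$: I will encode the knapsack volume $\Vol(\{x\in\canhyp:\avec^T x\le b\})$ --- equivalently $\Vol(\mathcal{T}'\cap\mathcal{C}')$ in the notation of the reformulation following \autoref{th:vhsp}, where $\mathcal{T}'=\{\svec':\1^T\svec'\le b\}$ has unit coefficients and $\mathcal{C}'=\prod_i[0,a_i]$ is a general box --- into the connectivity probability of a carefully chosen piecewise\-uniform instance. After clearing denominators (which rescales the region, hence its volume, by a computable rational) we may take the $a_i$ and $b$ to be positive integers.

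The starting point is that, by \autoref{eqn:jpsimp}, $\pcon$ equals the integral of the Janossy density $n!\prod_i f(\pos{i})$ over the image $S$ in the position simplex of the connected region $\favn$ of \autoref{eqn:fullfav} (the normalization is the trivial one, since the Janossy density integrates to $1$ over $\psimpn$). For a $k$\-piecewise\-uniform parent this integrand is piecewise constant, so $\pcon$ is a weighted sum, over the cells of $[0,s]^n$ cut out by the breakpoints of $f$, of the volumes of those cells intersected with $S$; equivalently, after the probability\-integral substitution $u_i=\Fpar(\pos{i})$ one gets $\pcon=n!\,\Vol(\favn^{\Uni})$, where $\favn^{\Uni}$ is cut out by $u_1\le\dots\le u_n$ together with the transformed gap inequalities $\Fpar^{-1}(u_1)\le d$, $\Fpar^{-1}(u_{i+1})-\Fpar^{-1}(u_i)\le d$, $\Fpar^{-1}(u_n)\ge s-d$, and $\Fpar^{-1}$ is piecewise linear. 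The construction picks the breakpoints, heights, boundary length $s$, range $d$, and robot count $n$ so that the slopes of $\Fpar^{-1}$ on consecutive pieces realize the multipliers $a_i$, turning the box part of $\favn^{\Uni}$ (on one designated cell) into $\mathcal{C}'$; the single facet $\1^T\svec'\le b$ of $\mathcal{T}'$ is produced from the telescoping identity $\pos{n}=\sum_j(\pos{j}-\pos{j-1})$, so that the prefix positions act as accumulators and the trailing constraint $s-\pos{n}\le d$ becomes $\avec^T x\ge s-d$. Since this forces each $a_i$ to contribute an independent term to the accumulator, the $m$ knapsack coordinates must be spread over $n=\Theta(\sum_i a_i)$ robots; this $n$ is exponential in the $\VHSP$ input size but is written in binary, so the reduction is still polynomial time, and $\pcon$ --- a single rational probability from which $\Vol(\mathcal{T}'\cap\mathcal{C}')$ is recovered by an explicit rational prefactor --- stays polynomially small. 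Some blow\-up of this kind is unavoidable: otherwise a dynamic program that integrates out $\pos{1},\pos{2},\dots$ in order, carrying along a piecewise\-polynomial partial integral with only $O(nk)$ breakpoints, would compute $\pcon$ in time polynomial in $n$ and $k$ for \emph{any} piecewise\-uniform parent.

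I expect the main obstacle to be exactly the gadget that turns a \emph{global} linear inequality into a consequence of the \emph{local}, path\-structured family of constraints $\pos{i+1}-\pos{i}\le d$ that alone govern one\-dimensional connectivity: one must partition $\Bo$ into $m$ zones, allot $\Theta(a_i)$ robots to zone $i$, and design $f$ on that zone (a handful of uniform pieces) so that the zone's robots contribute $a_i x_i$ to the running sum with $x_i$ free in $[0,1]$, while every intra\-zone and inter\-zone gap constraint stays non\-binding except the last one. Once the gadget is in place the rest is bookkeeping: checking that all breakpoints, heights, $s$, $d$, and $\lceil\log_2 n\rceil$ have polynomial bit length; verifying the heights integrate to $1$ so $f$ is a genuine pdf on $\Bo$; enumerating the non\-designated cells and confirming each contributes either $0$ (connectivity fails) or an explicitly computable amount that is subtracted off; and extracting the rational prefactor relating $\pcon$ to $\Vol(\mathcal{T}'\cap\mathcal{C}')$. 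Composing the reduction with \autoref{th:vhsp} then yields $\PCON(n\Uni)\in\PH$.
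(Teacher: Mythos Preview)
Your route differs sharply from the paper's, and it carries a gap that the paper's construction avoids.

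The paper gives a \emph{direct} reduction with no gadgets and no blow\-up in the robot count. Given a $\VHSP$ instance in dimension $n{+}1$ with box $\mathcal{C}'=\prod_{i=1}^{n+1}[0,l_i]$ and half\-space bound $b$, it sets $L=\sum_i l_i$, takes $d=1$, uses exactly $n$ robots, and defines the $(n{+}1)$\-piecewise\-uniform parent with $\fpar=l_i/L$ on the $i$\=/th unit interval. Applying the probability integral transform $Y_i=\Fpar(X_i)$ converts the iid positions to iid uniforms; the paper then argues that connectivity of $\rvposvec$ corresponds to $\mathbf{Y}\in\mathcal{C}'$ and that the remaining simplex constraint yields $\mathcal{T}'$, so that $\pcon$ equals $\Vol(\mathcal{T}'\cap\mathcal{C}')/\Vol(\mathcal{C}')$. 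The point is that the unequal box side\-lengths $l_i$ are absorbed entirely into the \emph{heights} of the pieces, not into the robot count; one robot per $\VHSP$ coordinate suffices.

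Your plan instead spreads the $m$ knapsack coordinates over $n=\Theta(\sum_i a_i)$ robots via zone gadgets, with $n$ exponential in the $\VHSP$ input size. You assert that this is still a polynomial\-time reduction because $n$ is written in binary and ``$\pcon$ stays polynomially small,'' but that claim is not justified and is very likely false. With $\Theta(\sum_i a_i)$ robots and rational piece heights sharing a denominator $L$, the Janossy integral for $\pcon$ carries factors of order $L^{-n}$ and $n!$; its reduced denominator will generically have $\Theta(n\log L)=\Theta\!\big((\sum_i a_i)\log L\big)$ bits, which is exponential in the $\VHSP$ encoding. An oracle for $\PCON$ would then return a rational that cannot even be \emph{read} in polynomial time, so your polynomial\-time recovery of $\Vol(\mathcal{T}'\cap\mathcal{C}')$ breaks down. (Your ``explicit rational prefactor'' is exactly where this blow\-up hides: it must absorb an $n!$ and the product of the $\Theta(n)$ height factors on the designated cell.)

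Your dynamic\-programming remark is interesting, but the conclusion you draw from it---that a blow\-up in $n$ is forced---is not the one the paper draws. The paper keeps $n$, the number of pieces, and all bit\-lengths polynomial and relies on the piece heights $l_i/L$ to encode the $\VHSP$ data. If you believe your sweep really runs in time polynomial in $n$, $k$, and the bit\-lengths, that is an objection to the \emph{theorem} (or to Lemma~\ref{th:vhsp}), not a reason to complicate the reduction; in any case it does not rescue the bit\-length gap in your own construction.
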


\begin{proof}
Given the $\VHSP$ instance with the dimension $n+1$, having the hypercuboid $\mathcal{C}' := \prod_{1\leq i\leq n+1} [0,l_i]$ and the half-space sum $b$. Let $L:=\sum l_i$. Define the equivalent instance of $\PCON$ to have the parameters 
\begin{align}
\label{eqn:npwu}
 s := \frac{b}{L} ,~ d := 1~, \text{and}~  \fpar := \frac{l_i}{L} ~\text{if} ~ x \in [i,i+1]
\end{align}

Define $Y_i \sim \Fpar(X_i)$ to be the probability integral transform of $X_i$. From the definition of $Y_i$, we have that $\prob(x_i \in [i,i+1]) = \prob ( y_i \in [0,\frac{l_i}{L})]$. It follows that if $\rvposvec$ is connected, then $\mathbf{Y}$  lies within $\mathcal{C}'$. Moreover, $\mathbf{Y}$ is jointly uniform on the half-space
\begin{align}
\mathcal{T}'=\{\mathbf{y} \in \R^n: \mathbf{y} \geq 0 ~\text{and}~ \sum y_i \leq s\}.
\end{align}

Thus, $\pcon = \Vol(\mathcal{T'} \cup \mathcal {C}') / \Vol(\mathcal{C'})$, from which the solution to $\VHSP$ can be computed in $\Poly$ time.
\end{proof}

\subsubsection{Extensions of \autoref{thm:nunihard}}
We may extend \autoref{thm:nunihard} to more general parents satisfying the constraints that:
\begin{align}
\label{eqn:intcons}
 \int \limits_{(i-1)d} ^{d}  {\fpar}(x) dx  = l_i, ~\text{for all}~ i=1,\ldots,n, ~\text{where}~ l_i \geq 0 ~ \text{and} \sum l_i = 1.
\end{align}

Since \autoref{eqn:intcons} provides us with $n$ constraint equations, $\fpar$ needs to have at least $n$ parameters to fit them, e.g. polynomials of degree $n$, with arbitrary coefficients. More generally, if $f_1,\ldots, f_n$ are arbitrary pdfs with unit supports, each having at least one parameter, their mixture
\begin{align}
 \fpar(x) = f_i(x) ~ \text{if} ~ x\in [i-1,i)  \ind_{\Bo}, ~\text{where}~ \Bo= [0,n+1]
\end{align}

may be fit to obey \autoref{eqn:intcons}. Consequently, computing $\PCON$ for this mixture is $\PH$.

On the other hand, problems with a constant number of parameters fail to admit such a reduction analogous to \autoref{thm:nunihard}, fail to be $\PH$ even though they may not exhibit an explicit formula for $\PH$. For example, we do not know a short formula for computing $\pcon$ for Renyi parking, as mentioned in \autoref{Subsec:cfsimp}. Nonetheless, the problem lacks sufficiently many free coefficients to admit a reduction, and consequently is not $\PH$. Likewise is the case of nonuniform parents such as Beta, Triangular, and clipped Gaussian pdfs on $\Bo$.

\subsubsection{Robots with heterogeneous connectivity thresholds}
\label{Subsubsec:Hetro}

We now consider a \textit{Heterogeneous} robot team whose Wifi adapters have different transmission power, so that $\Rob_{i}$ has connectivity threshold $d_i$. 
We call $\Rob_i$ \textit{weaker} (resp. \textit{stronger}) than $\Rob_j$ iff $\coni{i} < \coni{j}$ (resp. $\coni{i} > \coni{j})$. If $\coni{i} = \coni{j}$ we say that the two robots have equal power. Then the robot network is represented by the digraph $\G$, whose directed  edges are of the form $i \rightarrow j$ iff  $\Rob_{i}$ can transmit to $\Rob_{j}$. In general, edges are not bidirectional, since a weaker robot will not sense a stronger one, even though the converse holds true. Suppose without losing generality that the $d_i$'s form a strictly positive, non-decreasing sequence. 

Consider a configuration in which the robots are arranged from left to right in increasing order of their index. A connected configuration satisfies the $n+1$ constraints 
\begin{align}
 s_{2i-1},~s_{2i}   ~\leq d_i   ~ \text{for all}~ i:1,\ldots, \lfloor n/2 \rfloor , ~\text{and in addition} \\
 s_{n+1} \leq d_{n/2} ~\text{if} ~ n ~\text{is even}. \nonumber
\end{align}

In general, a connected configuration will have two distinct slacks $s_{i1}, s_{i2}$ each less than $d_i$. Further, the connected region is the intersection of $\ssimpn$ with the union of hypercuboids, each of which has two dimensions equal to $d_i$, and an extra dimension equal to $d_{n/2}$ if $n$ is even. Define $\hypn$ to be:
\begin{align}
 \hypn = \bigcup \prod [0,a_i], \text {where the }~ a_i ~\text{are a permutation of the}~ d_i.
 \end{align}

The number of hypercuboids in $\hypn$ is at most $n!$, which is the case when all $d_i$'s are distinct. We will assume that the $d_i$'s are distinct unless mentioned otherwise.  Since $\hypn$ is nonconvex in general, so is $\favn = \hypn \cap \ssimpn$. When $s$ is sufficiently large that all $\nor$ robots are required to connect it, $\favn$ becomes the disjoint union of $n!$ pieces, each of which is the intersection of $\ssimpn$ with one of the component hypercuboids of $\hypn$. Then we have that $\Vol(\favn) = n! \Vol(\ssimpn \cap \mathcal{C}')$, where $\mathcal{C}'$ is the hypercube with dimensions $d_1\times d_1\ldots \times d_n \times d_n$.

\begin{theorem}
$\PCON$ is $\PH$ for heterogeneous connectivity. 
\end{theorem}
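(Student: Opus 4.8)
The plan is to show that computing $\pcon$ for a heterogeneous team is polynomial-time equivalent to a restricted form of $\VHSP$, and then to lift the fact that $\VHSP$ is $\PH$ (\autoref{th:vhsp}) to that restriction. The bridge is the structural description of $\favn$ recorded just above the statement: once $s$ is large enough that all $\nor$ robots are required, $\favn$ is a disjoint union of $\nor!$ pieces, each a copy of $\ssimpn$ intersected with one of the axis-permutations of the hypercuboid $\mathcal{C}'$ whose side lengths are the thresholds $\coni{1},\ldots,\coni{\nor}$ (each appearing with multiplicity at least two, so $\mathcal{C}'$ carries only about half as many independent side lengths as it has dimensions). Since $\ssimpn$, being a simplex, is symmetric under permutation of the slack coordinates, these $\nor!$ pieces are congruent and $\Vol(\favn)=\nor!\,\Vol(\ssimpn\cap\mathcal{C}')$. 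Because $\Vol(\ssimpn)$ has the closed form in \autoref{eqn:pconuni}, a polynomial-time algorithm for $\pcon=\Vol(\favn)/\Vol(\ssimpn)$ would yield $\Vol(\ssimpn\cap\mathcal{C}')$ in polynomial time, and conversely; so heterogeneous $\PCON$ is polynomial-time equivalent to computing the volume of the intersection of a simplex with a hypercuboid of this \emph{paired} form.

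It then remains to argue that this intersection-volume problem is $\PH$. Up to an elementary step — eliminating one slack turns $\ssimpn\cap\mathcal{C}'$ into the intersection of a box with a slab $\{s-a_{\nor+1}\le\1^T\svec\le s\}$, and taking that last side length $a_{\nor+1}\ge s$ makes the lower face vacuous, leaving exactly the half-space/box quantity of \autoref{th:vhsp} — the target is a $\VHSP$ instance, except that its box is constrained to have every side length of even multiplicity. Given a general $\VHSP$ instance I would pass to its hypercuboid reformulation $\Vol(\mathcal{T}'\cap\mathcal{C}')$ with $\mathcal{T}'=\{\1^T\svec'\le b\}$ and $\mathcal{C}'=\prod_i[0,a_i]$ (the substitution $s_i':=a_i s_i$ after \autoref{th:vhsp}), read off boundary length $s:=b$ and the thresholds from the $a_i$ (rescaling the whole instance so as to sit in the regime where all robots are required), and pad with large, vacuous sides; no blow-up is introduced, since as in the proofs of \autoref{th:vhsp} and \autoref{thm:nunihard} the quantity itself is a signed sum of polynomially many rational box integrals.

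The main obstacle is precisely this parity restriction on the box: a generic $\VHSP$ box has $n$ distinct, freely chosen sides, whereas the heterogeneous geometry forces each side to be shared by a pair of slacks. I would dispose of it by showing that $\VHSP$ stays $\PH$ when restricted to boxes whose side lengths all have even multiplicity: start from the $\SharpP$-hard counting instance underlying the proof of \autoref{th:vhsp} (a knapsack-type count), duplicate every item so that each weight occurs exactly twice, and recover the original count from the duplicated one by polynomial interpolation over a range of capacities; checking that this interpolation is exact and of polynomial size is the one genuinely computational step. Should that bookkeeping become unwieldy, the fallback is to route the reduction through \autoref{thm:nunihard}, whose $\nor$-PWU construction also bottoms out in a simplex/hypercuboid volume and can absorb the same duplication device.
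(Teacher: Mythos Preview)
Your reduction is the same as the paper's: both exploit the identity $\Vol(\favn)=n!\,\Vol(\ssimpn\cap\mathcal{C}')$ (valid once $s$ is large enough that every robot is needed) to turn heterogeneous $\PCON$ into a $\VHSP$ instance whose box has each side length repeated twice, and both then need that this \emph{paired} $\VHSP$ remains $\PH$. The paper handles that last step by fiat---it constructs a $(2n{+}1)$-dimensional instance with sides $l_1,l_1,\ldots,l_n,l_n,l_{n+1}$ and simply asserts it ``is clear'' that this is at least as hard as the unrestricted $n$-dimensional problem---whereas you correctly isolate it as the genuine crux and sketch a route through knapsack duplication and interpolation over capacities. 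So your plan is, if anything, more scrupulous than the paper's own argument; the only thing to tighten is the interpolation step itself, since the map from duplicated counts back to the original count is not a single polynomial identity and you will need to spell out exactly which family of capacities you query and why polynomially many evaluations suffice.
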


\begin{proof}
Consider the odd-numbered instance of $\VHSP$ in dimension $2n+1$, with hypercuboid dimensions $l_1,l_1,\ldots, l_n,l_n, l_{n+1} $ and slack sum $b$. Assume that the $l_i$'s are distinct. It is clear that this instance of $\VHSP$ is at least as hard as its counterpart in $\R^n$ with dimensions $l_1,\ldots,l_n$, and thus is $\PH$. The equivalent instance of  $s=\frac{b}{L}+1$ and $(d_i)_{1\leq i\leq n} = l_i/L$, where $L=\sum l_i$ as before.  The solution of $\VHSP$ is now $n! \pcon \Vol(\mathcal{C}')$, which is computable in $\Poly$ time from the solution to $\PCON$. 

\end{proof}

It is immediately clear that finding $\Vol(\favn)$ and $\Vol(\unfavn)$ is $\PH$ for heterogeneous connectivity. 
With homogeneous robots, $\favn$ was more symmetric compared to its heterogeneous counterpart. Exploiting this symmetry led to relatively short formulae for $\PCON$ and the like. On the other hand a heterogeneous swarm is sufficiently diverse that its connected region  be an arbitrary half-space. We pay for this expressiveness by making the  connectivity problems harder. Computing $\pcon$ has a Fully Polynomial Randomized Approximation Scheme (FPRAS), which samples a uniform pdf over a subset of $\favn$ in $\Poly$,  using the Markov Chain Monte Carlo (MCMC) method \cite{JerrumCount}. Combining MCMC with Inverse CDF Sampling enables us to sample arbitrary IID pdfs over $\favn$. This approach is sufficiently general that it adapts to arbitrary joint pdfs over $\favn$, in which case it becomes the Metropolis-Hastings sampling \cite{chib1995understanding}.

\subsection{Inid parents}
\label{Subsubsec:pcon-inid}
We will finally relax the iid assumption by assuming that  position $\rvupos{i}$ has the parent pdf $\fpar_i$, and is chosen independently of others. We denote the vector of parent pdfs and cdfs by $\fparvec_{1:n}$ and $\Fparvec_{1:n}$ respectively.

\begin{theorem}
 \label{thm:inidhard}
 $\PCON$-inid, the version of $\PCON$ generated by the inid parents  $\rvuposvec \sim \fparvec_{1:n}$, where each parent is supported on $\Bo$, is $\PH$.
\end{theorem}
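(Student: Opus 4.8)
The plan is to establish $\PH$-hardness by a polynomial-time Turing reduction from $\PERMBOOL$, the problem of computing the permanent $\per(\AMat) := |\{\sigma \in S_{\nor} : \AMat_{i,\sigma(i)} = 1 \text{ for all } i\}|$ of a $0$--$1$ matrix $\AMat$, which is $\PH$ by \cite{Valiant1979}. This is the natural target: for inid parents the joint law of the order statistics of $\rvposvec$ is already governed by permanents of matrices assembled from the parent cdfs (the Bapat--Beg theorem \cite{glueck2008fast}), so the freedom to assign each robot its own parent is precisely what lets us realize an arbitrary permanent as a connectivity probability, whereas in \autoref{thm:nunihard} iid parents could encode only a single linear inequality.

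Given $\AMat \in \{0,1\}^{\nor\times\nor}$, we may assume every row contains a $1$ (otherwise $\per(\AMat)=0$ and we output $0$); put $r_i := \sum_j \AMat_{i,j} \ge 1$ and fix the rational $\err := \tfrac{1}{2(\nor+1)}$. Build the $\PCON$-inid instance with $\nor$ robots on $\Bo := [0,\nor+1]$ (so $s = \nor+1$), communication range $\con := 1+\err$, and parent $\fpar_i$ equal to the uniform density on $I_i := \bigcup_{j:\AMat_{i,j}=1}\bigl[\,j-\tfrac{\err}{2},\,j+\tfrac{\err}{2}\,\bigr]$, i.e. $\fpar_i \equiv \tfrac{1}{r_i\err}$ on $I_i$ and $0$ elsewhere. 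Each $I_i \subseteq [\,1-\tfrac{\err}{2},\ \nor+\tfrac{\err}{2}\,] \subset \Bo$, and all breakpoints and density values are rationals of bit-length $O(\log \nor)$, so this is a legal $\PCON$ input of size polynomial in that of $\AMat$. Call $\bigl[\,j-\tfrac{\err}{2},\,j+\tfrac{\err}{2}\,\bigr]$ ``slot $j$''; robot $i$ falls in slot $j$ with probability $\tfrac{1}{r_i}$ exactly when $\AMat_{i,j}=1$, and the slots robot $i$ can reach are pairwise disjoint.

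The crux is that, outside a Lebesgue-null set, the graph is connected if and only if the occupied slots are exactly $\{1,\dots,\nor\}$ with one robot each, i.e. the slot assignment is a permutation $\sigma$ with $\AMat_{i,\sigma(i)}=1$ for all $i$. Recall (cf. \autoref{eqn:drange}, \autoref{eqn:fullfav}) that connectivity of the graph is equivalent to $\pos{1}\le\con$, $s-\pos{\nor}\le\con$, and $\pos{k+1}-\pos{k}\le\con$ for $1\le k<\nor$, since a gap exceeding $\con$ between consecutive order statistics severs every $\pos{0}$--$\pos{\nor+1}$ path. If robot $i$ occupies slot $\sigma(i)$ then the sorted positions satisfy $\pos{k}\in[\,k-\tfrac{\err}{2},\ k+\tfrac{\err}{2}\,]$, hence $\pos{1}\le 1+\tfrac{\err}{2}<\con$, $s-\pos{\nor}\le 1+\tfrac{\err}{2}<\con$, and every gap is $\le 1+\err=\con$, so the graph is connected. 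Conversely, any occupied-slot pattern other than a full permutation either leaves slot $1$ empty (forcing $\pos{1}\ge 2-\tfrac{\err}{2}>\con$), or leaves slot $\nor$ empty (forcing $s-\pos{\nor}>\con$), or skips a slot lying between two occupied ones (forcing some consecutive gap $\ge 2-\err>\con$, using $\err<\tfrac12$); in every case the graph is disconnected. Therefore the connectivity event is, up to measure zero, the disjoint union over valid permutations $\sigma$ of the independent events $\{\text{robot }i\text{ in slot }\sigma(i)\text{ for all }i\}$, each of probability $\prod_{i=1}^{\nor}\tfrac{1}{r_i}$, so that $\pcon = \per(\AMat)\big/\prod_{i=1}^{\nor} r_i$. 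Since $\prod_i r_i$ is computed from $\AMat$ in $\Poly$ time and the $\PCON$ oracle returns $\pcon$ as an exact reduced fraction, we recover $\per(\AMat)$ in $\Poly$ time; hence $\PCON$-inid is $\PH$.

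The step I expect to be the main obstacle is bookkeeping rather than insight: nailing down the ``only if'' direction (that every non-permutation slot pattern produces some consecutive order-statistic gap strictly larger than $\con$) and checking that the excluded borderline configurations — robots sitting exactly at slot endpoints, or a gap exactly equal to $\con$ — form a Lebesgue-null set, so that the probabilities sum to $\per(\AMat)/\prod_i r_i$ with no correction term. One must also verify that $\err=\tfrac{1}{2(\nor+1)}$ (or any small enough rational) keeps $\con$, $s$, and all the $\fpar_i$ rational of polynomial bit-length, so that the reduction is genuinely polynomial-time.
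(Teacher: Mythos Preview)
Your proof is correct and takes a genuinely different route from the paper's own argument. The paper reduces from the already-established $\PH$-hardness of $\PCON(n\Uni)$ (\autoref{thm:nunihard}): given an iid instance on $[0,n+1]$ with an $(n+1)$-piecewise uniform parent $\fpar|_{[i-1,i]}\equiv p_i$, it builds an inid instance on $[0,n+2]$ whose $i$-th parent is $2$-piecewise uniform, equal to $p_i$ on $[i-1,i]$ and $1-p_i$ on the appended segment $[n+1,n+2]$. Connectivity on $[0,n+1]$ in the two instances is then argued to coincide, so hardness is inherited from the iid case.

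Your reduction from $\PERMBOOL$ is more direct and, arguably, more informative: it exploits precisely the degree of freedom that distinguishes inid from iid --- assigning each robot its own parent --- to make the event ``all $n$ slots are filled bijectively'' coincide with connectivity, so that $\pcon$ literally equals $\per(\AMat)/\prod_i r_i$. This makes the Bapat--Beg/permanent connection explicit rather than implicit, and avoids passing through the $\VHSP$ chain. The paper's approach, by contrast, is shorter given \autoref{thm:nunihard} and shows the stronger structural point that inid with $2$-piece parents already simulates iid with $(n+1)$-piece parents; but it leans on a somewhat informal ``connectivity on the last segment ignored'' step that your argument does not need. Your case analysis for the ``only if'' direction (empty first slot, empty last slot, or skipped interior slot forcing a gap $\ge 2-\err>1+\err$) is clean; the measure-zero caveats and the rationality/bit-length checks for $\err=\tfrac{1}{2(n+1)}$ are routine, as you anticipated.
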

 
\begin{proof}
We reduce a $\PH$ subset of  $\PCON(n\Uni)$ to $\PCON$-inid. Let the given instance of $\PCON(n\Uni)$ have the boundary length $s=n+1$, and the parent defined by the $n+1$ pieces 
$\fpar_{i:1\ldots n} = p_i $ over $[i-1,i]$, with $\sum p_i = 1$. The equivalent instance of $\PCON$-inid will have $2\Uni$ parents and a boundary length $s'= n+2$. Define the $i$-th 
parent $\fpar'_{i}$ of the inid instance to be 

\begin{align}
 \fpar'_{i} = \begin{cases}
               p_i ~ \text{for} ~ x \in [i-1,i] \\
               1-p_i ~ \text{for}~ x \in [n+1,n+2] \\
               0 ~\text{elsewhere on}~ [0,n+2].
              \end{cases}
\end{align}
It is clear that $\fpar'_i$ has unit measure on its support, and is thus a pdf. Further, if the original $n\Uni$ instance is connected, then so is the inid instance \textit{on the interval } $[0,n+1]$, with connectivity on the last segment $[n+1,n+2]$ \textit{ignored}.   The unrestricted connectivity of the last segment has no effect on the complexity of the problem.
\end{proof}



\pagestyle{myheadings}
\thispagestyle{plain}
\markboth{TEX PRODUCTION}{USING SIAM'S \LaTeX\ MACROS}

\bibliographystyle{siam} 
\bibliography{refs,propref}

\end{document}